\newtheorem{mydef}{Definition}
\newtheorem{mylem}{Lemma}
\title{Implicit Modeling -- A Generalization of Discriminative and Generative Approaches}
\author{Dmitrij Schlesinger, Carsten Rother\footnote{This work was supported by German Federal Ministry of Education and Research (BMBF, 01IS14014A-D) and European Research Council (ERC) under the European Union's Horizon 2020 research and innovation programme (grant agreement No 647769). The computations were performed on an HPC Cluster at the Center for Information Services and High Performance Computing (ZIH) at TU Dresden.}}
\affil{Dresden University of Technology}
\date{}
\begin{document} 
\maketitle

\begin{abstract}
We propose a new modeling approach that is a generalization of generative and discriminative models. The core idea is to use an implicit parameterization of a joint probability distribution by specifying only the conditional distributions. The proposed scheme combines the advantages of both worlds -- it can use powerful complex discriminative models as its parts, having at the same time better generalization capabilities. We thoroughly evaluate the proposed method for a simple classification task with artificial data and illustrate its advantages for real-word scenarios on a semantic image segmentation problem.
\end{abstract} 

\section{Introduction}
We start with an overview that illustrates the proposed modeling approach in a broader context. In statistical modeling it is assumed that {\em there exists} a generating probability distribution that generates data and the goal is to search for a modeling probability distribution that fits the data adequately. Even if it is not necessary to model the generating probability distribution completely (e.g.~in case of discriminative models, see later) we prefer to start the considerations with a joint probability distribution $p(x,y)$, where $x$ is an observation and $y$ is an unknown associated label that we want to obtain, e.g.~the class label in a classification task.

In order to predict $y$ from $x$, a common approach is to first model the joint distribution $p(x,y)$, in particular by decomposing it as $p(x,y){=}p(x|y){\cdot}p(y)$. This is called generative modeling, where a prior over the labels $p(y)$, and a likelihood $p(x|y)$ are specified. Prediction is then carried out by employing $p(y|x){\propto}p(x|y){\cdot}p(y)$, obtained via Bayes' rule. Generative modeling often admits natural decompositions of the joint distribution: for example, in image segmentation $p(y)$ is a prior model over ``good'' segmentations, and $p(x|y)$ describes the image formation process. The learning of the unknown parameters is often carried out by maximizing the joint log-likelihood of the training data
\begin{equation}
\arg\max_{\theta_1,\theta_2} \sum_{t=1}^T\Bigl[\log p(y_t;\theta_1)+\log p(x_t|y_t;\theta_2)\Bigr] ,
\end{equation}
where $\bigl((x_t,y_t),t{=}1{\ldots}T\bigr)$ is the training sample and $\theta_1$ and $\theta_2$ are unknown parameters to be learned.

Alternatively, one can decompose $p(x,y){=}p(y|x){\cdot}p(x)$. While this kind of decomposition is often less intuitive, it has the property that $p(x)$ can be arbitrary and thus be ignored, since only $p(y|x)$ is required to predict $y$ from $x$. This is called discriminative modeling, where only the conditional $p(y|x)$ is specified. A prominent example of such a modeling approach are conditional random fields (CRF) \cite{Lafferty:2001:CRF:645530.655813}. The statistical learning is usually done by maximizing the conditional log-likelihood
\begin{equation}
\arg\max_{\theta} \sum_{t=1}^T \log p(y_t|x_t;\theta) .
\end{equation}
Discriminative models often exhibit excellent performance when training data is abundant. Besides they are less vulnerable to misspecification (see e.g.~\cite{mapmap}) -- i.e.~the situation that the ``true'' generating probability distribution is not contained in the modeled family. However, discriminative models  are prone to over-fitting when the parameters of the distribution $p(y|x)$ are learned from \emph{little data}.

\paragraph{Related works.} In order to alleviate the vulnerability of discriminative models to over-fitting, regularization in form of prior knowledge about the parameters is often used. In fact, the unknown parameters of the probability distribution are handled as additional random variables. Consequently, regularization to prevent over-fitting is often done in an ad-hoc manner by imposing simple mathematically convenient priors on model parameters in form of e.g.~sparsity constraints, quadratic penalties, etc. (see e.g.~\cite{Nowozin:2011:SLP:2185833.2185834} for an overview of related methods). Usually, the assumptions about the used parameter prior are far from being well founded (e.g.~Gaussian in the parameter space). Moreover, the design of such a prior is usually less intuitive as compared with the design of the original model.

Another line of work (e.g.~\cite{Bouchard:2004:TGD,Mccallum:2006:MCL}) proposes to combine generative and discriminative likelihood functions for learning. An example is \cite{Mccallum:2006:MCL}, where the so-called multi-conditional log-likelihood $\log\left(p(y|x;\theta)^\alpha p(x|y;\theta)^\beta\right)$ is maximized during training. Here two conditional models share parameters, and $\alpha,\beta$ are used to weight the contributions of the two conditionals. Note that the combination is done on the algorithmic level, rather than on the modeling one. Basically, the proposed multi-conditional log-likelihood does not correspond to any probability and hence is not statistically well-motivated.

In \cite{Bishop:2007:GDG} a learning objective is defined by using a pair of models each with its own set of parameters. Blending between generative and discriminative models is achieved by putting a prior on the two sets of parameters: fully independent parameters lead to a fully discriminative model, and constraining parameters to be identical recovers the fully generative model. Note that the method operates with {\em two different} models. Furthermore, the used probability distributions have to belong to the {\em same} family, since their parameters must be comparable.

\paragraph{Contribution.} We propose a modeling framework that has the following properties. First of all, it is a generalization of commonly used generative and discriminative modeling approaches. Blending between generative and discriminative extremes is achieved by appropriate design of the corresponding model parts. The framework is statistically well motivated -- the learning procedure searches for a joint probability distribution in a well (although not explicitly) defined family. We have no prior for the parameter (in contrast to e.g.~\cite{Nowozin:2011:SLP:2185833.2185834}). There is no model approximation as in \cite{Mccallum:2006:MCL}. In contrast to e.g.~\cite{Bishop:2007:GDG} we are not restricted in the choice of the conditional distributions. We can combine state-of-the-art discriminative models $p(y|x)$ with arbitrary generative components for $p(x|y)$, increasing thereby the generalization capabilities of the former.


\section{Implicit Models}

From now on we denote random variables as well as their ranges by capitalized symbols, i.e.~$X$ for observations and $Y$ for hidden variables. Particular values are denoted by $x\in X$ and $y\in Y$. Similarly, $\theta\in\Theta$ denotes a value of an unknown parameter, where $\Theta$ is the parameter space. Consequently, $p(X,Y;\theta)$ denotes a particular joint probability distribution, whereas $p(X,Y;\Theta)$ is a family of probability distributions.

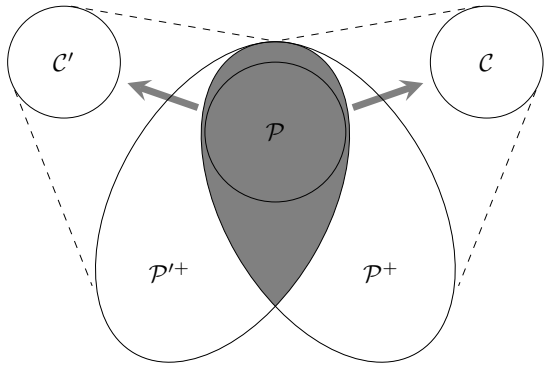
\begin{figure}
\centering
\resizebox{.6\linewidth}{!}{
\begin{tikzpicture}

\fill[gray] 
(0.75,-1) circle [x radius=1.5cm, y radius=2.5cm, rotate=30]
(-0.75,-1) circle [x radius=1.5cm, y radius=2.5cm, rotate=-30];
\fill[even odd rule][white]
(0.75,-1) circle [x radius=1.5cm, y radius=2.5cm, rotate=30]
(-0.75,-1) circle [x radius=1.5cm, y radius=2.5cm, rotate=-30];

\draw (0,0) node {${\mathcal P}$};
\draw (0,0) circle [x radius=1cm, y radius=1cm];

\draw (3,1) node {$\mathcal C$};
\draw [>=stealth,->,line width=1mm,color=gray] (1.1,0.35) -- (2.1,0.7);
\draw (3,1) circle [x radius=0.8cm, y radius=0.8cm];
\draw [dashed] (2.9,1.8) -- (0,1.3);
\draw [dashed] (3.7,0.6) -- (2.6,-2.2);
\draw (0.75,-1) circle [x radius=1.5cm, y radius=2.5cm, rotate=30];
\draw (1.5,-2) node {$\mathcal P^+$};

\draw (-3,1) node {$\mathcal C'$};
\draw [>=stealth,->,line width=1mm,color=gray] (-1.1,0.35) -- (-2.1,0.7);
\draw (-3,1) circle [x radius=0.8cm, y radius=0.8cm];
\draw [dashed] (-2.9,1.8) -- (0,1.3);
\draw [dashed] (-3.7,0.6) -- (-2.6,-2.2);
\draw (-0.75,-1) circle [x radius=1.5cm, y radius=2.5cm, rotate=-30];
\draw (-1.5,-2) node {$\mathcal P'^+$};
\end{tikzpicture}
}
\caption{\label{fig:families}The considered families of probability distributions (see text for explanations). The implicitly defined family of probability distributions of interest is shaded by gray color.}
\end{figure}

Let us recall the main advantages of generative and discriminative models, namely: (i) the generative models are more robust, i.e.~less vulnerable to overfitting, (ii) the discriminative models are more powerful and less vulnerable to misspecification for large data. The reason for such a behavior can be explained as follows. Consider a family of the joint probability distributions ${\mathcal P}{=}p(X,Y;\Theta)$ (see Fig.~\ref{fig:families}). Furthermore, consider the corresponding family of conditional (posterior) probability distributions ${\mathcal C}{=}p(Y|X;\Theta_1)$, i.e.~those which can be derived from members of $\mathcal P$. For example, let $\mathcal P$ be a family of Gaussian mixture models with two Gaussians having the same variance, i.e.~$p(x,y){=}p(y){\cdot}p(y|x)$, with the hidden variable $Y=\{1,2\}$ and $p(x|y;\mu,\sigma)={\mathcal N}(x;\mu_y,\sigma)$. Then $\mathcal C$ consists of all possible linear logistic regressions.

Note however, that at the same time there exists a corresponding family of prior distributions for observations $p(X;\Theta)$, i.e.~all those which can be obtained by marginalization of members of $\mathcal P$ over $Y$. The crucial assumption of the discriminative modeling is that the prior $p(X)$ can be arbitrary, which implicitly enlarges the family $\mathcal P$. Let us consider another family $\mathcal P^+$ of joint probability distributions, namely all those, whose conditional distributions are members of $\mathcal C$. For the above example with Gaussians, $\mathcal P^+$ will consists of all joint probability distributions that have a linear logistic regression as the posterior. Obviously, the family $\mathcal P^+$ is a superset of $\mathcal P$, i.e.~in $\mathcal P$ the family of priors $p(X;\Theta)$ is specified (i.e.~restricted), whereas in $\mathcal P^+$ the priors are arbitrary. Generative models work with $\mathcal P$ that is usually too restricted. It leads to worse recognition performance and vulnerability to model misspecification but better generalization capabilities. Discriminative models work in fact with $\mathcal P^+$ that is too general. Note that $\mathcal P^+$ can be understood as the cartesian product ${\mathcal C}{\times}p(X;\Theta)$, where $\Theta=\mathbb R^{|X|}$. Hence, $\mathcal P^+$ is extremely huge. If labeled training data is abundant, learning discriminative models is nevertheless able to ``recover'' the true generating probability distribution, which may be the member of $\mathcal P^+$ but not of $\mathcal P$. Hence, employing discriminative models alleviates the misspecification problem. However, it suffers from over-fitting for little data.

Our goal now is to develop an approach that allows to specify families of the joint probability distributions that are larger than $\mathcal P$ but more restricted than $\mathcal P^+$. The core idea is based on the following observation. In addition to the families $\mathcal P$, $\mathcal C$ and $\mathcal P^+$, consider the corresponding family of conditional probability distributions for observations given labels ${\mathcal C'}=p(X|Y;\Theta_2)$ -- i.e.~those which can be derived from members of $\mathcal P$. Furthermore, consider the family of joint probability distributions $\mathcal P'^+$ whose conditionals are members of $\mathcal C'$. Again, the family $\mathcal P'^+$ is a superset of $\mathcal P$. Hence, the intersection $\mathcal P^+\cap \mathcal P'^+$ is also a superset of $\mathcal P$. At the same time, this intersection is a subset of both $\mathcal P^+$ and $\mathcal P'^+$. Our goal is to learn joint models that are in this intersection. To summarize, we propose to \emph{implicitly} model the joint distribution $p(X,Y)$ by means of the two conditional distributions $p(Y|X)$ and $p(X|Y)$.

\begin{mydef}[Implicit Modeling]
Let two families $p(Y|X;\Theta_1)$ and $p(X|Y;\Theta_2)$ of conditional probability distributions be given. A joint probability distribution $p(X,Y)$ belongs to the family $p(X,Y;\Theta_1,\Theta_2)$ of interest if there exist $p(X)$, $p(Y)$, $\theta_1\in\Theta_1$ and $\theta_2\in\Theta_2$ satisfying
\begin{equation}\label{eq:impl1}
p(x)\cdot p(y|x;\theta_1)=p(y)\cdot p(x|y;\theta_2) \ \ \forall x\in X, y\in Y.
\end{equation}
Given that, the joint probability distribution is defined by $p(x,y;\theta_1,\theta_2)=p(x)\cdot p(y|x;\theta_1)$.
\end{mydef}
Obviously, the above conditions imply that the families $p(X)$ and $p(Y)$ of marginal probability distributions are not arbitrary but implicitly restricted by $\Theta_1$, $\Theta_2$ through \eqref{eq:impl1}. Hence, in further we will write $p(X;\Theta_1,\Theta_2)$ having in mind that $p(X)$ should satisfy \eqref{eq:impl1} (for $p(Y;\Theta_1,\Theta_2)$ analogously).

\subsection*{Weak implicit modeling}

In further we weaken conditions \eqref{eq:impl1}. The main motivation to do that is the following. In practice, we would not like ``to care'' about the modeling of the joint probability distribution at all. In other words, we would not like to {\em derive} the families of conditionals from a common family of joint probability distributions. We would like just to {\em design} two families of conditional probability distributions, hence implicitly defining the family of the corresponding joint ones. Moreover, we would like to be able to combine {\em nearly arbitrary} families of conditionals, that need not to have something in common from the modeling point of view. For example, we may want to use elaborated discriminative model (like e.g.~complex Conditional Random Fields or Convolutional Neural Networks) for $p(Y|X;\Theta_1)$, combining them with physically motivated forward models (e.g.~some rendering engines that produce images $x$ given a scene description $y$) for $p(X|Y;\Theta_2)$. However, given two particular conditionals $p(Y|X;\theta_1)$ and $p(X|Y;\theta_2)$, it may be the case that there is no priors $p(X)$ and $p(Y)$ satisfying \eqref{eq:impl1}. Moreover, it may be even the case that given two families $p(Y|X;\Theta_1)$ and $p(X|Y;\Theta_2)$, there is no pair of $\theta_1$ and $\theta_2$ for which there exist $p(X)$ and $p(Y)$ satisfying \eqref{eq:impl1}. Consider the following example. Let one of the families of conditionals, let say $p(X|Y;\Theta_2)$, is designed in such a way, that it leads to independent joint probability distribution, i.e.~$p(x|y;\theta_2)=p(x|y';\theta_2)$ for all $x$, all pairs $y$ and $y'$ and all $\theta_2$. Let the conditionals from the other family do not have this property. Obviously, according to the first family, the implicitly defined joint probability distribution should be independent, whereas according to the second one, it is not. Consequently, for such a pair of families there is no joint probability distribution satisfying \eqref{eq:impl1}.

Note that the conditions \eqref{eq:impl1} are posed for the marginal probabilities, but not for the joint ones. Hence, we weaken \eqref{eq:impl1} as follows:
{\sloppy
\begin{mydef}[Weak Implicit Modeling]
Let two families $p(Y|X;\Theta_1)$ and $p(X|Y;\Theta_2)$ of conditional probability distributions be given. A joint probability distribution $p(X,Y)$ belongs to the family $p(X,Y;\Theta_1,\Theta_2)$ of interest if there exist $p(X)$, $p(Y)$, $\theta_1\in\Theta_1$ and $\theta_2\in\Theta_2$ satisfying
\begin{eqnarray}\label{eq:impl2}
& & p(y)=\sum_{x\in X}p(x)\cdot p(y|x;\theta_1), \ \ \forall\ y\in Y , \nonumber \\
& & p(x)=\sum_{y\in Y}p(y)\cdot p(x|y;\theta_2), \ \ \forall\ x\in X .
\end{eqnarray}
Given that, the joint probability distribution is defined either by $p(x,y;\theta_1,\theta_2){=}p(x){\cdot} p(y|x;\theta_1)$ or by $p(x,y;\theta_1,\theta_2){=}p(y){\cdot} p(x|y;\theta_2)$ depending on the application.
\end{mydef}
}

Obviously, \eqref{eq:impl2} are necessary but not sufficient conditions for \eqref{eq:impl1}. The following lemma asserts the applicability of the weak implicit modeling for situations described above, namely if the considered families of conditional probability distributions are not derived from a common joint model, but designed in an application specific way independently from each other.

\begin{mylem}
Under some mild conditions there exists a pair of marginal probability distributions $p(X)$ and $p(Y)$ satisfying \eqref{eq:impl2} for any pair of conditional probability distributions $p(Y|X;\theta_1)$ and $p(X|Y;\theta_2)$ (see proof in Appendix \ref{appendix1}).
\end{mylem}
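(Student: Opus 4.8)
The plan is to recast the two constraints in \eqref{eq:impl2} as linear maps between the probability simplices over $X$ and $Y$ and then to settle existence by a fixed-point argument. First I would collect the sought marginals into column vectors $\mathbf{p}_X=\bigl(p(x)\bigr)_{x\in X}$ and $\mathbf{p}_Y=\bigl(p(y)\bigr)_{y\in Y}$, and introduce the matrices $A=\bigl(p(y|x;\theta_1)\bigr)_{y,x}$ and $B=\bigl(p(x|y;\theta_2)\bigr)_{x,y}$. Since each conditional sums to one over its left argument, both $A$ and $B$ are column-stochastic: their entries are non-negative and every column sums to $1$. In this notation \eqref{eq:impl2} reads simply $\mathbf{p}_Y=A\,\mathbf{p}_X$ and $\mathbf{p}_X=B\,\mathbf{p}_Y$.

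Next I would eliminate $\mathbf{p}_Y$ by substitution, reducing the system to the single fixed-point equation $\mathbf{p}_X=BA\,\mathbf{p}_X$. The key structural observation is that the composition $M=BA$ is again column-stochastic: non-negativity is preserved under the product, and $\mathbf{1}^{\top}M=\mathbf{1}^{\top}B A=\mathbf{1}^{\top}A=\mathbf{1}^{\top}$, where $\mathbf{1}$ denotes the all-ones vector. Consequently $M$ maps the probability simplex $\Delta_X=\{\mathbf{p}\ge 0:\ \mathbf{1}^{\top}\mathbf{p}=1\}$ continuously into itself. The simplex is non-empty, compact and convex, so Brouwer's fixed-point theorem guarantees a $\mathbf{p}_X\in\Delta_X$ with $M\mathbf{p}_X=\mathbf{p}_X$. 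Equivalently, one may invoke the Perron--Frobenius theorem: a column-stochastic matrix has eigenvalue $1$ with a non-negative eigenvector, which after normalization is the desired stationary distribution.

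Having secured such a $\mathbf{p}_X$, I would define $\mathbf{p}_Y:=A\,\mathbf{p}_X$. Then $\mathbf{p}_Y$ is automatically a probability distribution (it lies in $\Delta_Y$ because $A$ is column-stochastic), and the first line of \eqref{eq:impl2} holds by construction. The second line follows from the fixed-point property, since $B\,\mathbf{p}_Y=BA\,\mathbf{p}_X=M\mathbf{p}_X=\mathbf{p}_X$. This exhibits the required pair of marginals for any prescribed conditionals $p(Y|X;\theta_1)$ and $p(X|Y;\theta_2)$, which is exactly the claim.

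The main obstacle I expect is not the algebra but pinning down the ``mild conditions'' under which the fixed-point step is valid. For finite $X$ and $Y$ the argument above is unconditional, so the conditions are only needed to accommodate infinite (or continuous) ranges, where $M$ acts on an infinite-dimensional space and a stationary distribution need no longer exist. The cleanest sufficient hypothesis is finiteness of $X$ and $Y$; more generally one would impose enough compactness and continuity (e.g.~$M$ a compact operator, or the induced chain positive recurrent) to restore a Brouwer- or Schauder-type conclusion. Accordingly I would state the lemma with finiteness as the default assumption and remark on how a compactness condition replaces it in the general case.
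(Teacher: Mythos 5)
Your proof takes essentially the same route as the paper's: the same matrix reformulation of \eqref{eq:impl2} as $v_y=A\,v_x$, $v_x=B\,v_y$ with column-stochastic $A$ and $B$, the same substitution yielding the fixed-point equation $v_x=BA\,v_x$, existence via an eigenvector of the stochastic product to the eigenvalue $1$, and finally setting $v_y=A\,v_x$. If anything, you sharpen the paper's argument by noting (via Brouwer or Perron--Frobenius) that for finite $X$ and $Y$ existence is unconditional, so that the paper's ``mild conditions'' (e.g.~strict positivity of $C=BA$) are really about uniqueness of the stationary distribution or about infinite ranges, not about the bare existence the lemma asserts.
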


It is also easy to see that conditions \eqref{eq:impl2} implicitly define {\em two} (possibly) different joint probability distributions: one being defined by $p(x,y;\theta_1,\theta_2)=p(x;\theta_1,\theta_2)\cdot p(y|x;\theta_1)$ and the other defined by $p(x,y;\theta_1,\theta_2)=p(y;\theta_1,\theta_2)\cdot p(x|y;\theta_2)$. In practice, one should decide which of these two is of interest. In most cases we are interested in recognition at the end, for which only the posterior $p(Y|X)$ is necessary for inference. Therefore it seems reasonable to define the family of joint probability distributions so that its induced posterior coincides with the conditional $p(Y|X;\theta_1)$ that we model. Hence, in further we define the joint by $p(x,y;\theta_1,\theta_2)=p(x;\theta_1,\theta_2)\cdot p(y|x;\theta_1)$. Of course, in other application scenarios it might be the other case.

We would like to point in that although we weakened the original conditions \eqref{eq:impl1}, the family of joint probability distributions $p(x,y;\theta_1,\theta_2)=p(x;\theta_1,\theta_2)\cdot p(y|x;\theta_1)$ satisfying \eqref{eq:impl2} is still considerably restricted as compared to the standard discriminative approach. Remember that in the latter the marginal $p(X)$ is completely free, whereas in the former the set of $p(X)$ is restricted by \eqref{eq:impl2}.

The proposed modeling approach is neither generative nor discriminative (in the meaning described at the very beginning). It is closer to a generative one, in spirit, since we are working with joint probability distributions (although not explicitly defined). At the same time it is easy to see that both fully generative and discriminative models are special cases of the weak implicit modeling, by choosing the families of the conditional distributions as follows. When $p(X|Y;\Theta_2)$ does not depend on $Y$, i.e.~$p(x|y){=}p(x|y')$ for any $x$, $y$, $y'$ and $\theta_2$, we obtain a fully discriminative model. Similarly, we are in the generative extreme if $p(Y|X;\Theta_1)$ does not depend on $X$. On the other hand, if $p(Y|X;\Theta_1)$ is deterministic, i.e.~it is a mapping $X{\rightarrow}Y$, we deal with discriminant functions. Hence, the whole spectrum of modeling approaches fits into a common framework illustrated in Fig.~\ref{fig:frame}. The axes correspond to the ``weakness'' of the constituents. Commonly used fully discriminative and fully generative approaches occupy relatively small areas along the axes. We attempt to explore the area in the middle, among other things, to push the discriminative approaches towards the direction depicted by the blue arrow in the figure, increasing their robustness to overfitting.

\begin{figure}
\centering
\resizebox{.8\linewidth}{!}{
\input{./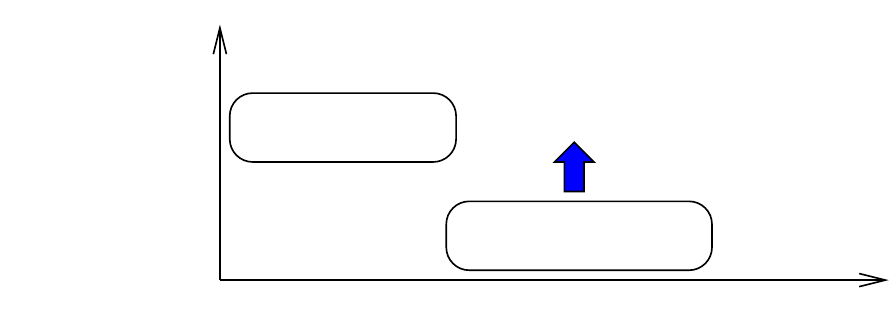_t}
}
\caption{\label{fig:frame}A common framework of generative, discriminative and weak implicit modeling approaches. Axes correspond to the weakness of the corresponding parts. ``$p(Y|X)=p(Y)$'' means that $p(Y|X)$ does not depend on $X$, ``$X\rightarrow Y$'' means a deterministic mapping. For $p(X|Y)$ analogously.}
\end{figure}

The ``weakness'' of the used conditional probability distributions (and hence the position of an approach in the proposed framework in Fig.~\ref{fig:frame}) can be controlled by choosing how strongly for example $p(X|Y)$ depends on $Y$. In practice it can be done e.g.~in the following manner. Consider a distribution defined as $p(x|y)\propto\exp(\alpha E(x,y)+E(x))$ with some energy functions $E(\cdot)$. Setting $\alpha$ to zero leads to independence, a strong $\alpha$ corresponds to a deterministic mapping, i.e.~there is a (unique) value of $y$ for each $x$. If necessary, such a controlling factor can be set manually, weighting the ``weakness/importance'' of the corresponding family. On the other hand, it can also be considered as a usual unknown parameter and learned from data.


\section{Learning Implicit Models}
\label{sec:learning}

Given a training sample of observation-label pairs $\bigl((x_t,y_t),t{=}1{\ldots}T\bigr)$, we aim to learn our implicit model by maximizing the joint log-likelihood
\begin{eqnarray}
\lefteqn{\arg\max_{\theta_1,\theta_2} \sum_{t=1}^T \log p(x_t,y_t;\theta_1,\theta_2) =} \nonumber \\
& & =\arg\max_{\theta_1,\theta_2}\sum_{t=1}^T \Bigl[\log p(x_t;\theta_1,\theta_2)+ \log p(y_t|x_t;\theta_1)\Bigr] .
\label{eq:task2}
\end{eqnarray}
For gradient-based optimization, we need to differentiate \eqref{eq:task2} wrt.~$\theta_1,\theta_2$. It is straightforward to do that for the conditional $p(y|x;\theta_1)$. In contrast, the differentiation of the first addend in \eqref{eq:task2} is less trivial because there is no closed-form expression for it, i.e.~the marginal distribution $p(X;\theta_1,\theta_2)$ depends {\em implicitly} on both $\theta_1$ and $\theta_2$.

We model $p(X;\theta_1,\theta_2)$ by means of a generation process that samples observations according to the desired probability distribution. Let us assume an (infinite) Markov chain that generates sequences $(x^0,y^0,x^1,y^1,\ldots, x^{n-1},y^{n-1},x^n)$, $n{\rightarrow}\infty$ by sampling from the corresponding conditional distributions $p(Y|X;\theta_1)$ and $p(Y|X;\theta_2)$ respectively, i.e.~$y^i$ is drawn from $p(Y|X{=}x^i;\theta_1)$ and $x^i$ is drawn from $p(X|Y{=}y^{i-1};\theta_2)$. Obviously, stationary distribution of such a Markov chain satisfies \eqref{eq:impl2} (it follows directly from Lemma 1, see Corollary in Appendix \ref{appendix1}), i.e.~our $p(X;\theta_1,\theta_2)$ is the stationary distribution of this Markov chain. Hence, we model $p(x_t;\theta_1,\theta_2)$ with the probability that a Markov chain as described above generates the training example $x_t$.

For better readability we omit here the detailed derivation of the algorithm for optimizing \eqref{eq:task2}. More detailed explanations can be found in appendix \ref{appendix2}. In short: the probability $p(x_t;\theta_1,\theta_2)$ is obtained by marginalization over all sequences $(x^0,y^0,x^1,y^1,\ldots, x^{n-1},y^{n-1},x^n{=}x_t)$ generated by the Markov chain considered above. This summation is infeasible especially if $x$ and $y$ are complex by themselves, e.g.~images and labelings (the state space of the Markov chain is of huge dimensionality). Hence, we use stochastic gradient ascent for approximation -- instead to marginalize over all sequences, we draw just one from the Markov chain. To summarize, we need to carry out two steps to perform stochastic gradient ascent for optimizing \eqref{eq:task2} (on the example of just one training example $(x^\ast,y^\ast)$): (i) Generate a sequence of $x$-s and $y$-s starting from the training example $x^\ast$ according to the probability distributions $p(Y|X;\theta_1)$ and $p(X|Y;\theta_2)$ given the current model parameters, and (ii) consider certain generated pairs as the ``additional labeled training examples'' for computing gradients wrt.~$\theta_1$ and $\theta_2$. One stochastic gradient ascent step for one training example $(x^\ast,y^\ast)$ is summarized in Algorithm \ref{alg:alg1} and illustrated in Fig.~\ref{alg:alg} (for many examples their gradients should be averaged). Thereby, we assume that the conditional probability distributions of interest are both members of the exponential family, i.e.~they can be written in form e.g.~$p(y|x;\theta_1)\propto \exp\bigl(\langle\eta_1(x,y),\theta_1\rangle\bigr)$, where $\eta_1(x,y)$ are sufficient statistics. Hence, the necessary gradients are just differences of the corresponding sufficient statistics.

Interestingly, the algorithm is somewhat similar to the standard conditional likelihood learning. More precisely, if we omit the sampled $\tilde x$ and $\hat y$ and use only the pairs $(x^\ast,y^\ast)$ and $(x^\ast,\tilde y)$ for updating the gradient, it would be exactly the conditional likelihood. The crucial difference here is that not only pairs from the training sample are used for learning $p(Y|X;\theta_1)$ but also pairs that are generated by the current conditional probability distributions $p(Y|X;\theta_1)$ and $p(X|Y;\theta_2)$, which implicitly define the current joint probability distribution $p(X,Y,\theta_1,\theta_2)$.

\newcommand{\imrow}[1]{
\noindent
\includegraphics[width=0.19\textwidth]{./pics/#1_6_y2_c.png}\hfill
\includegraphics[width=0.19\textwidth]{./pics/#1_4_x1.png}\hfill
\includegraphics[width=0.19\textwidth]{./pics/#1_3_y1_c.png}\hfill
\includegraphics[width=0.19\textwidth]{./pics/#1_1_x0.png}\hfill
\includegraphics[width=0.19\textwidth]{./pics/#1_0_y0_c.png}\hfill
}

\begin{algorithm}[t]
\caption{\label{alg:alg1} Stochastic gradient ascent step for \eqref{eq:task2} (one training example) }
\begin{algorithmic}
\STATE {\bf Input:} Training example $(x^\ast,y^\ast) \equiv (x_t,y_t)$,\\
current parameters $\theta_1,\theta_2$
\STATE {\bf Output:} Updated model parameters $\theta_1',\theta_2'$
\STATE Sample $\tilde y$ according to $p(Y|X{=}x^\ast;\theta_1)$
\STATE Sample $\tilde x$ according to $p(X|Y{=}\tilde y;\theta_2)$
\STATE Sample $\hat y$ according to $p(Y|X{=}\tilde x;\theta_1)$
\STATE Compute sufficient statistics $\eta$ for the generated pairs
\STATE Compute gradients and do steepest ascent with a small step size $\lambda$:

\vskip -4ex
\begin{eqnarray*}
& & \theta_1'=\theta_1+\lambda\cdot\bigl(\eta_1(\tilde x,\tilde y)-\eta_1(\tilde x,\hat y)+ \eta_1(x^\ast,y^\ast)-\eta_1(x^\ast,\tilde y)\bigr) \\
& & \theta_2'=\theta_2+\lambda\cdot\bigl(\eta_2(x^\ast,\tilde y)-\eta_2(\tilde x,\tilde y)\bigr)
\end{eqnarray*}
\end{algorithmic}
\end{algorithm}

\begin{figure*}
\begin{center}
\resizebox{.8\linewidth}{!}{\input{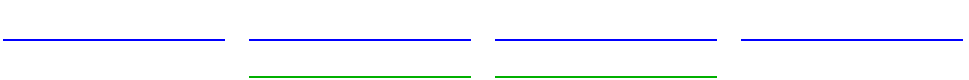_t}}

\vspace{1ex}
\imrow{0103420}

\vspace{1ex}
\resizebox{.8\linewidth}{!}{\input{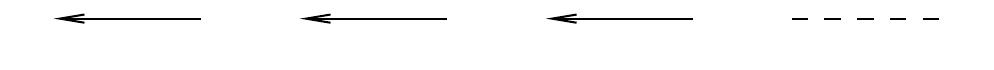_t}}
\end{center}
\caption{\label{alg:alg}
Illustration of a single step of stochastic gradient ascent. The examples in the chain are generated in the  direction shown by the black arrows using the current conditional probability distributions. The pairs marked blue are used to learn $p(Y|X;\theta_1)$. The gradient is thereby obtained as the difference of sufficient statistics for the corresponding pairs. The pairs marked green are used to learn $p(X|Y;\theta_2)$. Images are taken from our experiment with semantic segmentation (see the next section). More qualitative results can be found in Appendix \ref{appendix3}.}
\end{figure*}

At this point we would like to note that our algorithm reminds to some extent on the Contrastive Divergence \cite{Hinton2002}, since we also generate chains of a finite length starting from training examples in order to get examples for gradient calculation. We would like to point in that our algorithm is by no means a variant of Contrastive Divergence, because the latter just performs another task. It is designed to draw examples from a {\em joint} probability distribution given {\em explicitly}. It is done by alternate sampling from the conditional probability distributions that are {\em derived} from the target joint one (i.e.~Gibbs Sampling). Usually, both conditional probability distributions are simply given by means of an energy function that is {\em the same} for both conditionals. Given a generated chain, the {\em last} example is assumed to be drawn according to the needed joint probability distribution and is used for gradient calculation. In contrast, our algorithm does not need single examples drawn from the joint probability distribution, but the whole chains that are generated according to the certain rules (see Appendix \ref{appendix2} for details). Thereby {\em all} generated examples are used for gradient calculation. Moreover, we do not specify the joint probability distribution at all. Only conditional probability distributions are given, that are not derived from a common joint one but just belong to pre-defined families, that might have nothing in common (no similar structure, no common parameters, etc.).


\section{Experiments}

\paragraph{Classification example with artificial data.} We start with an artificial example to illustrate our modeling approach. First, we define a generating joint probability distribution, from which training and test samples are drawn. Thus we define it in a generative manner, i.e.~$p(x,y)=p(y)\cdot p(x|y)$. The variable $Y$ is discrete and can take three values $y\in\{1,2,3\}$ (we will call them ``classes''), the prior probability distribution of classes is uniform. The observation $X$ is a real number, the conditional probability distribution $p(X|Y)$ consists of Gaussians (one per class) of the same variance but with different mean values. To summarize, the generating model is
\begin{equation}
p(x,y)=\frac{1}{3}\cdot\frac{1}{\sqrt{2\pi}\sigma}\exp\left[-\frac{(x-\mu_y)^2}{2\sigma^2}\right] .
\end{equation}
In particular, we use the variance $\sigma=1$ and mean values $\mu_y=-1,0,1$ for three classes in this experiment.

The learned implicit model consists of two parts. The first one (we refer it as discriminative part) is the conditional probability distribution $p(Y|X)$ of classes given an observation, for which we choose quadratic logistic regression $p(y|x){\propto}\exp[a_y{\cdot}x^2{+}b_y{\cdot} x{+}c_y]$, $y{\in}\{1,2,3\}$. We intentionally use a model that may over-fit (linear logistic regression would be sufficient for the true generating model as described above). The second part (called the generative one) is the conditional probability distribution for observations given classes, for which we use one Gaussian per class. To be consistent with the above notations we use the exponential family like parameterization $p(x|y){\propto}\exp[d_y{\cdot} x^2{+}e_y{\cdot} x]$. Note that this model may over-fit as well, as we allow the Gaussians to have different $d_y$ that correspond to different variances.

We perform several experiments with an increasing size of training samples to analyze the generalization capabilities of the models and learning approaches that we consider. Each experiment consists of the following: first, a training sample of a particular size is drawn from the generating probability distribution. After that learning is performed, where we compare the following schemes:
\begin{enumerate}
\item {\bf Standard conditional likelihood}, which we consider as the baseline. In addition, we perform learning with a quadratic regularizer for model parameters -- using ``weak'' (a small weighting constant) and ``strong'' regularization.
\item {\bf Learning of the implicit model} as described in the previous section (we refer it as ``implicit learning'').
\end{enumerate}
For the learned model we define the classifier to be the Maximum A-posteriori decision and compute two measures:
\begin{enumerate}
\item {\bf Test error rate.} The (average) error rate on a very large test sample, i.e.~the Bayesian risk for the learned classifier.
\item {\bf Risk difference.} Additionally, we compute the absolute value of the difference between the error rate on the training sample (the empirical risk) and the error rate on the test sample (the Bayesian risk, when the test sample is large enough). We would like to emphasize that this measure is crucial to adequately assess the generalization capabilities of the different models and learning strategies, because it represents a kind of ``guarantees'' for predicting unseen data. Smaller risk difference means thereby better generalization capability.
\end{enumerate}
\begin{figure}
\includegraphics[width=0.49\linewidth]{./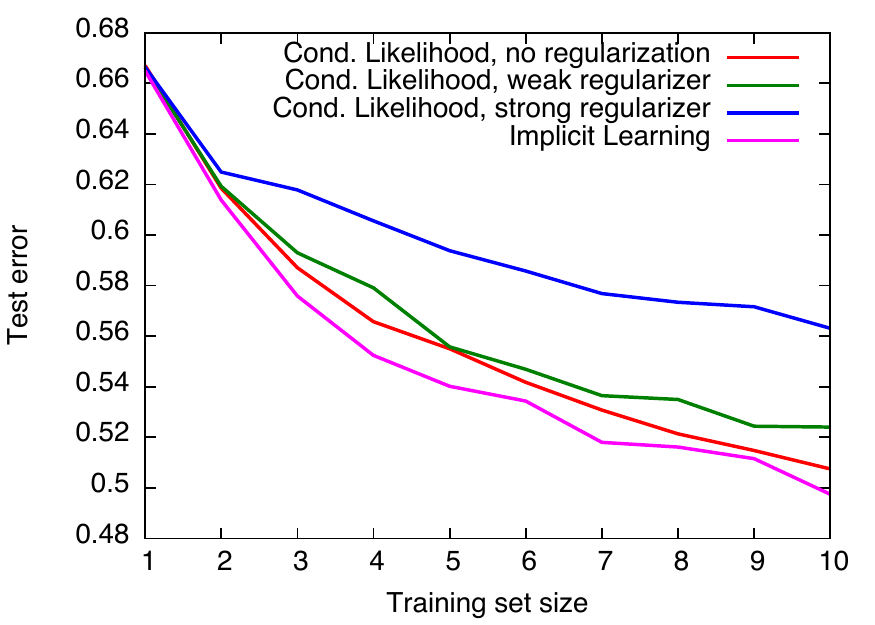}\hfill
\includegraphics[width=0.49\linewidth]{./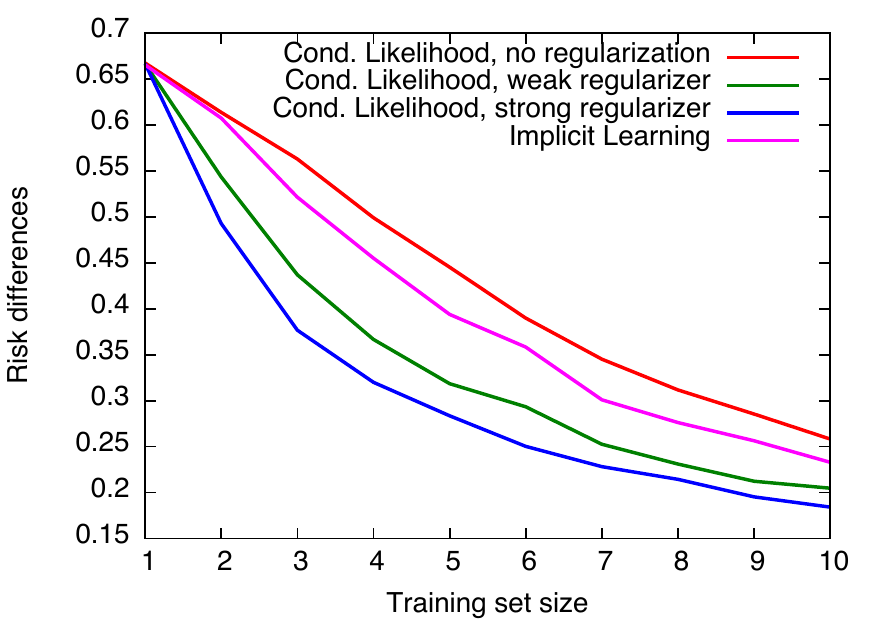}

\includegraphics[width=0.49\linewidth]{./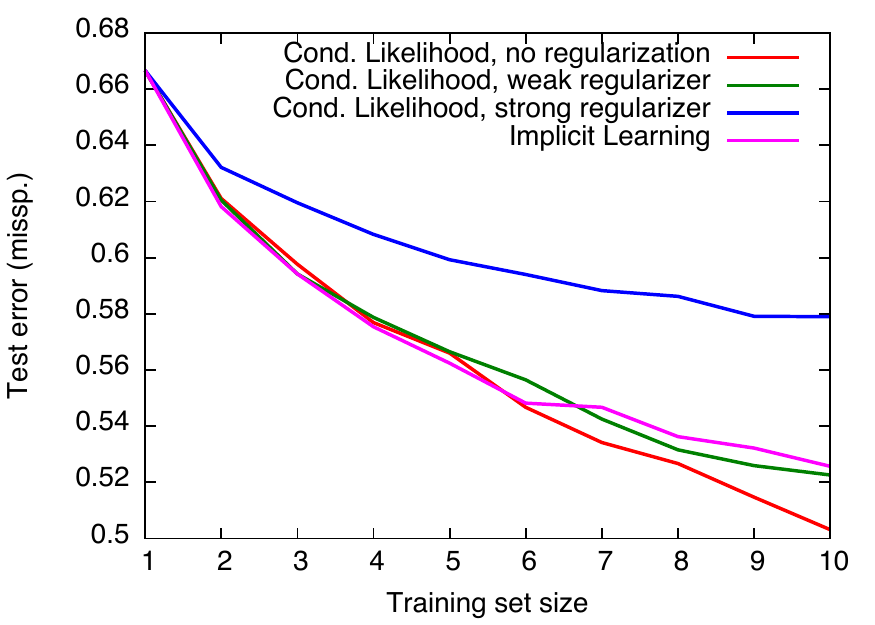}\hfill
\includegraphics[width=0.49\linewidth]{./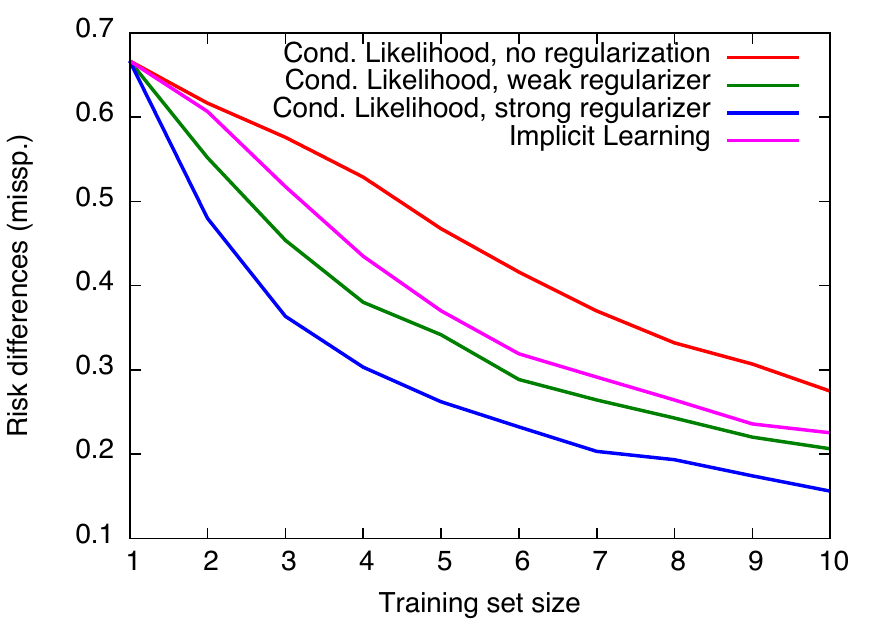}
\caption{\label{fig:art} Results for experiments with artificial data. Top row -- experiment without misspecification, bottom row -- with misspecification. Left: test error rates, right: risk differences.}
\end{figure}
For each particular size of the training set we repeat the experiment several times and report average measurements. Fig.~\ref{fig:art} (top row) shows the dependencies of the considered measures on the size of the training set. First of all, one can clearly see that the implicit learning consistently outperforms the standard conditional likelihood approach (without regularization) with respect to both measures. With respect to the recognition rate the implicit learning is also consistently better as compared to the regularized likelihood (for both weak and strong regularization). The most robust learning is the conditional likelihood with strong regularization. It is easy to explain by the fact that both the empirical and the Bayesian risks in this situation are worse as compared to other methods (see the blue line in Fig.~\ref{fig:art} (top, left)). Hence, the usual trade-off ``performance vs.~stability'' is clearly seen. 

The focus in the previous experiment was mainly to analyze the behavior of the various schemes with respect to over-fitting and generalization. There is also another (to some extent ``opposite'') problem that is often crucial in many real applications, namely the problem of misspecification. The situation is that the true (generating) probability distribution is not contained in the family of modeling probability distributions. It is well known that discriminative modeling performs better for larger training samples (in comparison to generative modeling), because the set of modeling probability distributions is larger. The aim of the following experiment is to study the influence of the introduced generative part on the behavior of the learning with respect to misspecification for smaller training samples. To address this problem we simulate misspecification in the following way. First, we restrict the generative part $p(X|Y)$ of our implicit model to be Gaussians with the {\em same} variance. At the same time, in the generating model we use Gaussians of {\em different} variances. Note that the posterior $p(y|x){\propto}\exp[a_y{\cdot} x^2{+}b_y{\cdot} x{+}c_y]$ alone is still well-specified. The generative part $p(X|Y)$ is not indeed. Consequently, the family of joint probability distributions that are modeled implicitly by the pair $p(X|Y)$ and $p(Y|X)$ is misspecified as well. In other words, the usage of such a misspecified $p(X|Y)$ considerably restricts the family of modeling joint probability distributions. We perform the same kind of experiments as done previously, the results are shown in Fig.~\ref{fig:art} (bottom row). As expected, the implicit learning performs slightly worse with respect to the recognition rate as compared to the case without misspecification. At the same time, the discriminative learning is better for larger training samples. Nevertheless, implicit learning still slightly outperforms the baseline for smaller samples. Moreover, the stability (risk difference) is almost not affected, hence the generalization capabilities of the implicit learning is still considerably superior.


\paragraph{Semantic Segmentation.} In the next experiment we study the behavior of implicit modeling for the application of semantic image segmentation. The task is to label each pixel of an image by a class from a predefined label-set $L$. Formally, the hidden vector $y$ is a labeling $y:V\rightarrow L$ of a graph $G=(V,E)$ whose nodes $i\in V$ correspond to image pixels and whose edges $(i,j)\in E$ encode a neighborhood structure (we use 8-neighborhood). The observation $x$ is an image $x:V\rightarrow C$, i.e.~mapping that assigns a color $c\in C$ to every pixel of the image. Among other things, in this experiment we would like to simulate a situation that is very common in Computer Vision. Assume that there exists already a model and we want to improve it using it just as a ``black box'', i.e.~to combine it with some other model or to build something on top of it. We use pixel-wise independent decision forest classifier as such a ``black box''\footnote{We used the code from \cite{dtf}.}. Let $z_i(x)\in L$ be the output label predicted by the decision forest for the pixel $i$. Based on that we define the following conditional random field to specify $p(Y|X)$:
\begin{equation}\label{eq:dtfcrf}
p(y|x)\propto \exp\Bigl[\sum_{i\in V} q\bigl(y_i,z_i(x)\bigr) + \sum_{ij\in E} \bigl(a_t(y_i,y_j)+b_t(y_i,y_j)\cdot ||x_i-x_j||^2\bigr)\Bigr]
\end{equation}
The first part of the energy represents unary potentials. The table $q:L{\times}L\rightarrow\mathbb R$ reflects the ``reliability'' of the decision forest's output. The pairwise potentials are linear functions of the squared color differences $a+b\cdot||x_i-x_j||^2$ in the corresponding neighbouring pixels. The coefficients $a$ and $b$ depend on edge type (horizontal, vertical or diagonal, referred by index $t$ in \eqref{eq:dtfcrf}) as well as on the label pair $(y_i,y_j)$ on this edge. To summarize, given a decision forest we have $9\times|L|^2$ additional free parameters in our CRF.

For the generative part (i.e.~the probability distribution $p(X|Y)$) we use a model that is similar in spirit to a Gaussian mixture model. To this end we introduce latent variables $g_i$ for each pixel $i$ that represent ``color numbers''. In addition the coloring is required to be smooth inside the segments. To summarize, the model is
\begin{eqnarray}\label{eq:dtfmrf}
\lefteqn{p(x,g|y)\propto\exp\Bigl[\sum_{i\in V} \bigl( h(y_i,g_i)+c\cdot||x_i||^2+}\nonumber\\
& & +\langle d(g_i),x_i\rangle\bigr)+e{\cdot}\sum_{ij\in E} \delta(y_i{=}y_j){\cdot}||x_i-x_j||^2 \Bigr] .
\end{eqnarray}
The weight matrix $h$ assigns a value for each pair (label, color number). The coefficient $c$ (shared for all colors) and color specific vectors $d$ control the distribution of RGB-values given the color number. The constant $e$ controls the smoothness of coloring inside the segments. $\delta(\cdot)$ is the Kronecker delta -- it gives $1$ if its argument is true. The necessary $p(x|y)$ is obtained by marginalizing \eqref{eq:dtfmrf} over $g$. All unknown parameters are learned in our experiments.

Recall that for learning we need to draw samples from $p(Y|X)$ and $p(X|Y)$, that is not a trivial task by itself, since both \eqref{eq:dtfcrf} and \eqref{eq:dtfmrf} are CRF-s. We use Gibbs Sampling for this purpose. In addition, in order to accelerate the overall learning procedure, we use ``warm start'' at each gradient update iteration as follows. Let e.g.~$\tilde y_t$ denotes the example $\tilde y$ sampled from the Markov chain for the training example $(x_t,y_t)=(x^\ast,y^\ast)$ (see Alg.~1). At the beginning of the learning procedure we initialize the necessary labeling examples from all chains (i.e.~$\tilde y_t$, $\hat y_t$ for all training examples $(x_t,y_t)$) randomly. The coloring examples $\tilde x_t$ are initialized by the corresponding original images $x_t$. At each gradient update iteration of the algorithm we draw new examples by Gibbs Sampling, starting from examples obtained in the previous gradient update iteration. For example, the new $\tilde y_t$ is obtained by Gibbs Sampling using the current conditional probability distribution $p(Y|X=x_t;\theta_1)$, starting from the old $\tilde y_t$ (from the previous iteration), performing thereby only a small number of Gibbs Sampling iterations. In practice, just one Gibbs Sampling iteration per gradient update usually performs best, taking into account both the quality of the results and computational demand. Similar acceleration tricks are widely used in the literature (see e.g.~\cite{tieleman2008training,ChenSYU15}).

We use the Stanford Background Dataset \cite{DAGS} for this experiment. Due to the computational demand of Gibbs Sampling, we do not use the whole dataset and only choose $100$ images randomly. The training examples (up to $90$ images) are chosen randomly out of these $100$, the rest is used for testing. For each training sample we first learn a pixel-independent Decision Forest classifier using standard methods. For the corresponding CRF \eqref{eq:dtfcrf} we again compare two different scenarios: standard conditional likelihood learning and implicit learning. After the models are learnt we use the maximum marginal decision strategy for inference. The ``recognition error'' is the relative number of misclassified pixels, i.e.~the Hamming distance averaged over the training/test sample. The measures of interest are as before the recognition error itself and the averaged absolute difference between the training and the test errors.

\begin{figure}
\includegraphics[width=0.49\linewidth]{./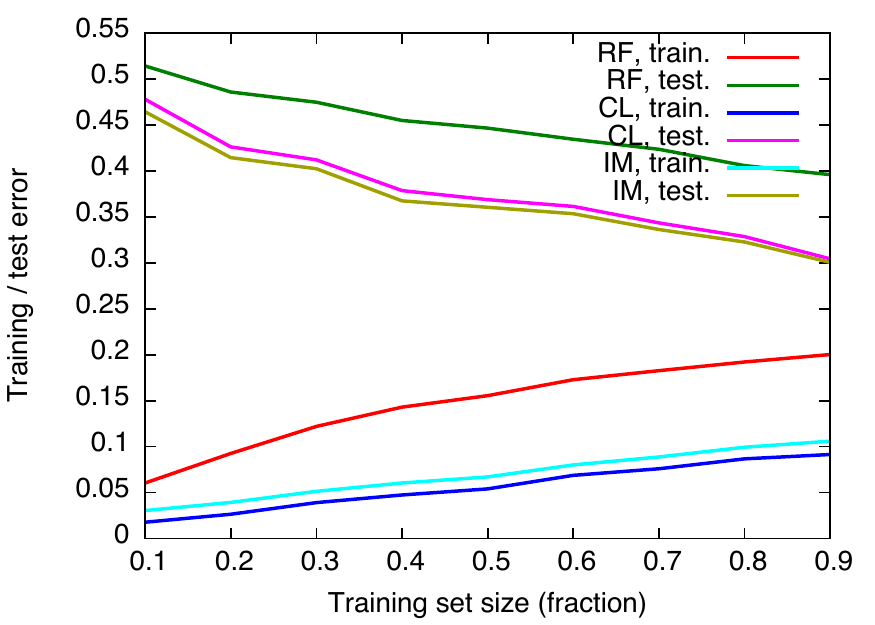}\hfill
\includegraphics[width=0.49\linewidth]{./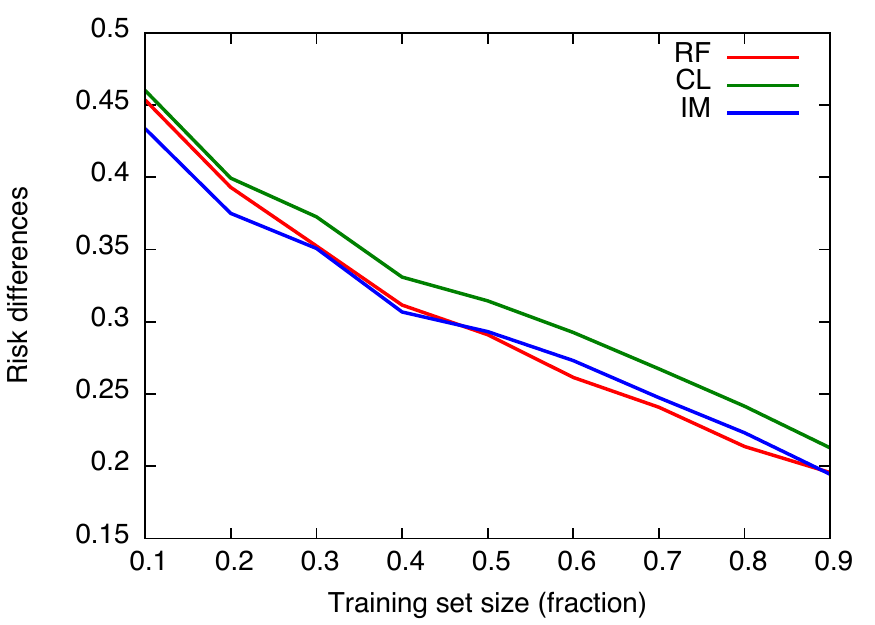}
\caption{\label{fig:su} Results for semantic segmentation. RF -- pixel-wise independent Random Forest classifier. CL -- Conditional Likelihood learning. IM -- implicit model. Left: training and test errors. Right: error differences.}
\end{figure}

The results are presented in Fig.~\ref{fig:su} (some qualitative results are shown in appendix \ref{appendix3}). Despite the test error we give also the training error for clarity. In addition, the rates for basic Random Forest classifier are also given. The typical behaviour can be observed. Building a CRF on top of the Random Forest improves both training and test errors. However, the training error is improved more essentially as compared to the test error. Hence, the stability becomes worse (compare the red and the green lines in Fig.~\ref{fig:su}, right). It is indeed expected, because in the CRF additional free parameters are introduced. Using implicit modeling makes the {\em training} error {\em worse} as compared to the Conditional Likelihood. However, at the same time the {\em test} error becomes {\em better}. Hence, by using implicit modeling we get the stability back. To summarize, the implicit learning outperforms the standard Conditional Likelihood with respect to both recognition rate and stability for the whole range of training sample sizes.


\section{Conclusion}

We presented a new modeling approach called implicit modeling. The families of joint probability distributions of interest are modeled implicitly by specifying two families of conditional probability distributions. We showed that the weak implicit modeling is a generalization of both generative and discriminative approaches. Being able to use powerful discriminative models as its constituents, the method has at the same time better generalization capabilities. Experiments on both artificial and real data confirm the statement.

There are numerous directions for further works. The presented approach can work with nearly arbitrary constituents. Hence, an interesting research direction would be to apply the method to complex discriminative models such as decision tree fields \cite{dtf} or convolutional neural networks \cite{long_shelhamer_fcn}. 

In this paper we mainly focused on modeling aspects and used maximum likelihood for learning. Other choices are possible as well. For instance, using pseudo-likelihood may lead to more efficient learning. It would be also interesting to apply other learning objectives like loss-driven learning or composite likelihood to implicitly defined models.

In our opinion, an essential advantage of the implicit modeling in comparison to standard discriminative models is the ability (at least theoretically) to learn unknown parameters in a fully unsupervised manner (since the {\em joint} likelihood is optimized). It gives the possibility to train complex posterior probability distributions using incomplete or very weakly labeled data, that is often highly desirable in practice. We did not elaborate this direction so far, hence, it will be one of our further works.

\bibliography{impl-mod-arxiv}
\bibliographystyle{plain}

\newpage 

\begin{appendix}
\setcounter{mydef}{0}
\setcounter{mylem}{0}

\section{Proof of Lemma 1}\label{appendix1}

Let us first recall the definitions and the statement.
\begin{mydef}[Implicit Modeling]
Let two families $p(Y|X;\Theta_1)$ and $p(X|Y;\Theta_2)$ of conditional probability distributions be given. A joint probability distribution $p(X,Y)$ belongs to the family $p(X,Y;\Theta_1,\Theta_2)$ of interest if there exist $p(X)$, $p(Y)$, $\theta_1\in\Theta_1$ and $\theta_2\in\Theta_2$ satisfying
\begin{equation}\label{eq:impl1_a}
p(x)\cdot p(y|x;\theta_1)=p(y)\cdot p(x|y;\theta_2) \ \ \forall x\in X, y\in Y.
\end{equation}
Given that, the joint probability distribution is defined by $p(x,y;\theta_1,\theta_2)=p(x)\cdot p(y|x;\theta_1)$.
\end{mydef}

{\sloppy
\begin{mydef}[Weak Implicit Modeling]
Let two families $p(Y|X;\Theta_1)$ and $p(X|Y;\Theta_2)$ of conditional probability distributions be given. A joint probability distribution $p(X,Y)$ belongs to the family $p(X,Y;\Theta_1,\Theta_2)$ of interest if there exist $p(X)$, $p(Y)$, $\theta_1\in\Theta_1$ and $\theta_2\in\Theta_2$ satisfying
\begin{eqnarray}\label{eq:impl2_a}
& & p(y)=\sum_{x\in X}p(x)\cdot p(y|x;\theta_1), \ \ \forall\ y\in Y , \nonumber \\
& & p(x)=\sum_{y\in Y}p(y)\cdot p(x|y;\theta_2), \ \ \forall\ x\in X .
\end{eqnarray}
Given that, the joint probability distribution is defined either by $p(x,y;\theta_1,\theta_2){=}p(x){\cdot} p(y|x;\theta_1)$ or by $p(x,y;\theta_1,\theta_2){=}p(y){\cdot} p(x|y;\theta_2)$ depending on the application.
\end{mydef}
}

\begin{mylem}
Under some mild conditions there exists a pair of marginal probability distributions $p(X)$ and $p(Y)$ satisfying \eqref{eq:impl2_a} for any pair of conditional probability distributions $p(Y|X;\theta_1)$ and $p(X|Y;\theta_2)$.
\end{mylem}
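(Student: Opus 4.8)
The plan is to recast the two marginal-consistency requirements in \eqref{eq:impl2_a} as a single fixed-point condition and then invoke a standard existence theorem for stationary distributions. First I would assemble the conditionals into matrices: let $A$ be the matrix with entries $A_{yx}=p(y|x;\theta_1)$ and $B$ the matrix with entries $B_{xy}=p(x|y;\theta_2)$. Because each conditional sums to one over its output argument, both $A$ and $B$ are column-stochastic. Writing $p(X)$ and $p(Y)$ as column vectors $\mathbf{p}_X$ and $\mathbf{p}_Y$ living in the respective probability simplices, the two lines of \eqref{eq:impl2_a} read $\mathbf{p}_Y=A\mathbf{p}_X$ and $\mathbf{p}_X=B\mathbf{p}_Y$.

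Next I would eliminate $\mathbf{p}_Y$ to obtain $\mathbf{p}_X=BA\,\mathbf{p}_X$, so the task reduces to finding a fixed point of the linear map $T=BA$ inside the simplex; once such a $\mathbf{p}_X$ is available, setting $\mathbf{p}_Y:=A\mathbf{p}_X$ makes both equations of \eqref{eq:impl2_a} hold by construction. The key observation is that a product of two column-stochastic matrices is again column-stochastic (summing a column of $BA$ over the $X$-index factors through the column sums of $A$ and $B$, each equal to one). Hence $T$ is the transition matrix of a Markov chain on $X$ -- precisely the $x\to y\to x$ marginal chain used later in the learning section -- and $\mathbf{p}_X$ is sought as its stationary distribution.

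The existence of such a stationary distribution is then the crux. Under the mild assumption that $X$ and $Y$ are finite, $T$ maps the compact convex simplex continuously into itself, so Brouwer's fixed-point theorem yields a fixed point in the simplex; equivalently, the Perron--Frobenius theorem guarantees that the column-stochastic matrix $T$ has eigenvalue $1$ with a non-negative eigenvector, which normalizes to a genuine probability vector. I would then define $\mathbf{p}_Y=A\mathbf{p}_X$ and verify directly that both consistency equations hold, completing the argument.

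The main obstacle I anticipate is not the algebra but guaranteeing that the fixed point is an honest probability distribution -- non-negative and normalizable rather than a signed or zero eigenvector -- which is exactly what the non-negativity clause of Perron--Frobenius (or the simplex version of Brouwer) supplies, and this is what forces the finiteness hypothesis hidden in the phrase ``mild conditions''. If one wanted to drop finiteness and allow countably infinite or continuous $X$, compactness of the simplex fails and existence would require additional functional-analytic hypotheses (e.g.\ tightness or a Doeblin-type minorization condition); uniqueness of the pair $(\mathbf{p}_X,\mathbf{p}_Y)$, moreover, would need irreducibility of the chain $T$, which is presumably another of the intended mild conditions and is also what is needed for the Corollary identifying $p(X;\theta_1,\theta_2)$ with the chain's stationary distribution.
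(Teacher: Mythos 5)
Your proposal is correct and takes essentially the same route as the paper's own proof: encode the conditionals as column-stochastic matrices $A$ and $B$, eliminate $p(Y)$ to reduce \eqref{eq:impl2_a} to the fixed-point equation $v_x = BA\,v_x$, obtain $v_x$ as the stationary distribution (eigenvector to eigenvalue $1$) of the stochastic matrix $C=BA$, and set $v_y = A v_x$. Your added remarks on Brouwer/Perron--Frobenius, finiteness, and irreducibility simply make explicit the ``mild conditions'' that the paper illustrates with the strict-positivity example.
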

\begin{proof}
Let us represent the probability distribution $p(X)$ by a vector $v_x{\in}\mathbb R^{|X|}$ and the probability distribution $p(Y)$ by a vector $v_y{\in}\mathbb R^{|Y|}$. Similarly, $p(Y|X)$ can be understood as a matrix $A$ of size $|Y|{\times}|X|$, where matrix elements $a_{ij}$ satisfy $a_{ij}\geq 0$ and $\sum_i a_{ij}=1$ for all $j$. Analogously, $p(X|Y)$ is another matrix $B$ of $|X|{\times}|Y|$ elements. Hence, conditions \eqref{eq:impl2_a} can be rewritten as
\begin{equation}\label{eq:cond}
v_y=A\cdot v_x, \ \ \ v_x = B\cdot v_y .
\end{equation}
Let us substitute the first part of \eqref{eq:cond} in the second one and obtain 
\begin{equation}\label{eq:cond1}
v_x=B\cdot A\cdot v_x=C\cdot v_x.
\end{equation}
Note that both $A$ and $B$ are stochastic matrices. Hence, so is their product $C{=}B{\cdot}A$ as well. It means that under some mild conditions (for example, it is enough that all elements of $C$ are strictly positive that is almost always the case in practice) there exists $v_x$ satisfying \eqref{eq:cond1}, which is the eigenvector of $C$ to the eigenvalue $1$. To summarize, let us define $v_x$ to be the eigenvector of $C$, and  $v_y$ as $v_y=A\cdot v_x$. Obviously, all conditions \eqref{eq:impl2_a}-\eqref{eq:cond1} are satisfied.

\end{proof}

\noindent
{\bf Corollary}. Let us consider the Matrix $C{=}B{\cdot}A$ as a matrix of transition probabilities of a Markov chain, that generates sequences of $x$-s $(x^0,x^1,\ldots,x^n)$, $n{\rightarrow}\infty$. Obviously, the above considered vector $v_x$ (and hence, the probability distribution $p(X)$) is the stationary distribution of this Markov chain. On the other hand, let us consider now the matrix $D{=}A{\cdot}B$ that can be understood as the matrix of transition probabilities for a Markov chain, that generates sequences of $y$-s $(y^0,y^1,\ldots,y^n)$, $n{\rightarrow}\infty$. Obviously, the eigenvector of $D$ also satisfies all conditions \eqref{eq:impl2_a}-\eqref{eq:cond1}, and hence, is also the stationary distribution of the corresponding Markov chain. Let us now consider a Markov chain that generates sequences $(x^0,y^0,x^1,y^1,\ldots, x^{n-1},y^{n-1},x^n,y^n)$, $n{\rightarrow}\infty$ by alternate sampling from the corresponding distributions $p(Y|X)$ and $p(Y|X)$ respectively, i.e.~$y^i$ is drawn from $p(Y|X{=}x^i)$ and $x^i$ is drawn from $p(X|Y{=}y^{i-1})$. It is easy to see that both $x^n$ and $y^n$ obey the corresponding stationary distributions $p(X)$ and $p(Y)$ respectively.


\section{Algorithm for Learning Implicit Models}
\label{appendix2}

Here we give a more detailed derivation of the algorithm for learning implicit models. First, let us recall the problem to be solved. Given a training sample of observation-label pairs $\bigl((x_t,y_t),t{=}1{\ldots}T\bigr)$, we aim to learn our implicit model by maximizing the joint log-likelihood
\begin{eqnarray}
\lefteqn{\arg\max_{\theta_1,\theta_2} \sum_{t=1}^T \log p(x_t,y_t;\theta_1,\theta_2) =} \nonumber \\
& & =\arg\max_{\theta_1,\theta_2}\sum_{t=1}^T \Bigl[\log p(x_t;\theta_1,\theta_2)+ \log p(y_t|x_t;\theta_1)\Bigr] .
\label{eq:task2_a}
\end{eqnarray}
For gradient-based optimization, we need to differentiate \eqref{eq:task2_a} wrt.~$\theta_1,\theta_2$. It is straightforward to do that for the conditional $p(y|x;\theta_1)$. In contrast, the differentiation of the first addend in \eqref{eq:task2_a} is less trivial because there is no closed-form expression for it, i.e.~the marginal distribution $p(X;\theta_1,\theta_2)$ depends {\em implicitly} on both $\theta_1$ and $\theta_2$.

We model $p(X;\theta_1,\theta_2)$ by means of a generation process that samples observations according to the desired probability distribution. Let us assume an (infinite) Markov chain that generates sequences $(x^0,y^0,x^1,y^1,\ldots, x^{n-1},y^{n-1},x^n)$, $n{\rightarrow}\infty$ by sampling from the corresponding conditional distributions $p(Y|X;\theta_1)$ and $p(Y|X;\theta_2)$ respectively, i.e.~$y^i$ is drawn from $p(Y|X{=}x^i;\theta_1)$ and $x^i$ is drawn from $p(X|Y{=}y^{i-1};\theta_2)$. Obviously, stationary distribution of such a Markov chain satisfies \eqref{eq:impl2_a} (see the Corollary above), i.e.~our $p(X;\theta_1,\theta_2)$ is the stationary distribution of this Markov chain. Hence, we model $p(x_t;\theta_1,\theta_2)$ with the probability that a Markov chain as described above generates the training example $x_t$.

To ease exposition, let us assume for now that we want to maximize the probability of a single observation $x^\ast$ under $p(X;\theta_1,\theta_2)$, i.e.~only the first part of \eqref{eq:task2_a} for just one example. Consider a sequence 
\begin{equation}
(z,x^\ast) = (x^0, y^0, x^1, y^1, \ldots , x^{n-1}, y^{n-1}, x^\ast)
\end{equation}
generated by the Markov chain whose last entry is the observation $x^\ast$ ($z$ summarizes all generated $x$-s and $y$-s but $x^\ast$, $Z$ denotes the set of all sequences). The probability of this sequence is
\begin{eqnarray}
\label{eq:chain_a}
\lefteqn{p(z,x^\ast;\theta_1,\theta_2) = p(x^0) \cdot p(y^0|x^0;\theta_1) \cdot}\nonumber \\
& & \cdot \left[\prod_{i=1}^{n-1} p(x^i|y^{i-1};\theta_2) \cdot p(y^i|x^i;\theta_1)\right] \cdot p(x^\ast|y^{n-1};\theta_2) .
\end{eqnarray}
The probability of the marginal for the observation $x^\ast$ is obtained by marginalization over all possible sequences that end with $x^\ast$:
\begin{equation}
p(x^\ast;\theta_1,\theta_2) = \sum_z p(z,x^\ast;\theta_1,\theta_2) .
\end{equation}
Note that this probability does not depend on $x_0$ in \eqref{eq:chain_a} if the chain has infinite length, and can be neglected in practice if it is run long enough. Note that now $p(x^\ast)$ depends on the unknown parameters $\theta_1$ and $\theta_2$ in an {\em explicit} manner through the generating Markov chain. Hence, we can compute the gradient of the log marginal
\begin{eqnarray}
\label{eq:llh_marginal_a}
\lefteqn{\frac{\partial}{\partial \theta_{1,2}} \log p(x^\ast;\theta_1,\theta_2) = \frac{\partial}{\partial \theta_{1,2}} \log \sum_z p(z,x^\ast;\theta_1,\theta_2) = } \nonumber \\
& & = \left[\sum_{z'} p(z',x^\ast;\theta_1,\theta_2)\right]^{-1}\cdot
\sum_z \frac{\partial p(z,x^\ast;\theta_1,\theta_2)}{\partial \theta_{1,2}} = \nonumber\\
& & = \sum_z \frac{p(z,x^\ast;\theta_1,\theta_2)}{p(x^\ast;\theta_1,\theta_2)}\cdot
\frac{\partial \log p(z,x^\ast;\theta_1,\theta_2)}{\partial \theta_{1,2}} = \nonumber\\
& & = \sum_z p(z|x^\ast;\theta_1,\theta_2)\cdot\frac{\partial \log p(z,x^\ast;\theta_1,\theta_2)}{\partial \theta_{1,2}}
\end{eqnarray}
that we require for learning \eqref{eq:task2_a}. In practice, marginalization over all sequences $z$ in \eqref{eq:llh_marginal_a} is intractable, especially when $x$ and $y$ are complex (e.g.~images or labelings). Note however that the gradient in \eqref{eq:llh_marginal_a} is an expectation over the probability distribution $p(Z|X^n{=}x^\ast;\theta_1,\theta_2)$. Hence, we can use {\em stochastic gradient ascent}, i.e.~we replace the expectation of a random variable by its realization. To summarize, we need to carry out two steps to perform stochastic gradient ascent for \eqref{eq:llh_marginal_a}:
{\sloppy 
\begin{enumerate}
\item Generate a sequence $\hat{z}$ according to the probability distribution $p(Z|X^n{=}x^\ast;\theta_1,\theta_2)$ given the current model parameters, and
\item compute the gradient of $\log p(\hat{z},x^\ast;\theta_1,\theta_2)$ with respect to $\theta_1$, $\theta_2$.
\end{enumerate}
}
Concerning the second step, we obtain
\begin{eqnarray}
\frac{\partial \log p(\hat{z},x^\ast;\theta_1,\theta_2)}{\partial \theta_1} & = & \sum_{i=0}^{n-1}\frac{\partial\log p(\hat{y}^i|\hat{x}^i;\theta_1)}{\partial\theta_1}, \nonumber \\
\frac{\partial \log p(\hat{z},x^\ast;\theta_1,\theta_2)}{\partial \theta_2} & = & \sum_{i=1}^{n-1}\frac{\partial\log p(\hat{x}^i|\hat{y}^{i-1};\theta_2)}{\partial\theta_2} + \nonumber \\
& & + \frac{\partial \log p(x^\ast|\hat{y}^{n-1};\theta_2)}{\partial\theta_2} ,
\end{eqnarray}
where $\hat{x}^i$ and $\hat{y}^i$ are elements of the generated $\hat{z}$. This can be interpreted as follows. After a sequence $\hat{z}$ is generated, its pairs $(\hat{x}^i,\hat{y}^i)$ at ``odd'' positions can be considered as an ``additional labeled training sample'', for which the conditional log-likelihood of $p(Y|X;\theta_1)$ has to be maximized with respect to $\theta_1$. The same holds for ``even'' positions of the chain (including the transition from $\hat{y}^{n-1}$ to $x^\ast$) and the conditional probability distribution $p(X|Y;\theta_2)$ wrt.~$\theta_2$.

Unfortunately, generating a sequence $\hat{z}$ for the first step is not as easy as just to generate a sequence from a Markov chain. This is because we need to generate sequences according to $p(Z|X^n{=}x^\ast;\theta_1,\theta_2)$, i.e.~conditioned by the last chain member. Common approaches for that would be e.g.~importance sampling, sampling with rejection or Gibbs sampling. Unfortunately, these methods usually turn out to be too time-consuming, especially for complex $x$ and $y$. Therefore in practice, we assume that our Markov chain is ``reversible'' in the sense that we can generate the sequence in the ``opposite'' direction, i.e.~starting from $x^\ast$ towards $x^0$, with the same probability. Note that this assumption is true, if the current conditional probability distributions $p(Y|X;\theta_1)$ and $p(X|Y;\theta_2)$ lead to existence of a {\em unique} joint probability distribution $p(X,Y;\theta_1,\theta_2)$, i.e.~if the conditions \eqref{eq:impl1_a} hold and we are in the case of (non-weak) implicit modeling. Otherwise, we assume that the two joint probability distributions induced by \eqref{eq:impl2_a} (see the definition of the weak implicit modeling) are close enough to each other. Hence, examples drawn from $p(x,y;\theta_1,\theta_2){=}p(x){\cdot} p(y|x;\theta_1)$ and from $p(x,y;\theta_1,\theta_2){=}p(y){\cdot} p(x|y;\theta_2)$ obey nearly the same joint probability distribution. Observe that in doing so the pairs $(\hat{x}^i,\hat{y}^i)$ from the sequence $\hat{z}$ that were generated using $p(X|Y;\theta_2)$ (during the generation in opposite direction) serve as the training data to learn $p(Y|X;\theta_1)$ and vice versa.

\paragraph{Exponential family.} In order to do learning as described above, the conditional probabilities $p(y|x;\theta_1)$ and $p(x|y;\theta_2)$ must be differentiable wrt.~$\theta_1$ and $\theta_2$ respectively. Furthermore, we should be able to draw samples from both distributions. To make this slightly more concrete, let us assume that the conditional distributions both belong to the exponential family, i.e.~they can be written in the form
\begin{align}
p(y|x;\theta_1) &= \frac{1}{Z_1(x,\theta_1)} \cdot \exp\bigl(\langle\eta_1(x,y),\theta_1\rangle\bigr) , \nonumber \\
p(x|y;\theta_2) &= \frac{1}{Z_2(y,\theta_2)} \cdot \exp\bigl(\langle\eta_2(x,y),\theta_2\rangle\bigr),
\end{align}
where $Z_{1,2}$ are the partition functions, $\eta_{1,2}$ are the sufficient statistics, and $\theta_{1,2}$ the unknown parameters; $\langle\cdot,\cdot\rangle$ denotes the inner product between two vectors. For a particular pair $(x,y)$ the gradients of the conditional log-likelihoods are
\begin{eqnarray}\label{eq:lcl_t2}
\lefteqn{\frac{\partial\log p(y|x;\theta_1)}{\partial \theta_1} = \eta_1(x,y)-\frac{\partial\log Z_1(x,\theta_1)}{\partial \theta_1} =} \nonumber \\
& & = \eta_1(x,y)-\mathbb E_{p(Y|X;\theta_1)}\bigl[\eta_1(x,y)\bigr],\nonumber\\
\nonumber \\
\lefteqn{\frac{\partial\log p(x|y;\theta_2)}{\partial \theta_2} = \eta_2(x,y)-\frac{\partial\log Z_2(y,\theta_2)}{\partial \theta_2} =} \nonumber \\
& & = \eta_2(x,y)-\mathbb E_{p(X|Y;\theta_2)}\bigl[\eta_2(x,y)\bigr],
\end{eqnarray}
where $\mathbb E_p[\cdot]$ denotes the expectation of a random variable $[\cdot]$ over a probability distribution $p$. Similar to the marginalization over all sequences $z$ (see above), we use stochastic approximation, i.e.~we exchange the expectation of a random variable by its realization. Hence, the stochastic gradient for $p(y|x;\theta_1)$ in \eqref{eq:lcl_t2} is obtained as the difference of sufficient statistics $\eta_1(x,y)-\eta_1(x,\hat{y})$ with an example $\hat{y}$ drawn from the probability distribution $p(Y|X;\theta_1)$ (for $p(X|Y;\theta_2)$ analogously). Note that we already need to draw these examples in order to generate the chain $(\hat{z},x^\ast)$, so we can use them for gradient calculation \eqref{eq:lcl_t2}, i.e.~we do not need to extra generate them.

So far we focused on maximization of $p(x^\ast)$, i.e.~only the first addend in \eqref{eq:task2_a}. Obviously, we should take into account the second addend in \eqref{eq:task2_a} as well. Its stochastic gradient however is again the difference of the corresponding sufficient statistics \eqref{eq:lcl_t2}. So we have only to add this gradient part to the previous one. 

To conclude, one stochastic gradient ascent step for one training example $(x_t,y_t)$ is summarized in Algorithm \ref{alg:alg1} and illustrated in Fig.~\ref{alg:alg} (see the main part of the paper). It is easy to see that the influence of the generated pairs on the log-likelihood decreases with the distance between the example position and the position of $x^\ast$ in the chain. In practice we find that it is enough to generate a chain of minimal length, as required to compute the necessary sufficient statistics. 

\section{Qualitative Results for Semantic Segmentation}
\label{appendix3}

Here we would like just to give an impression about qualitative results of our approach applied for semantic segmentation. In Figs.~\ref{fig:semseg1} and \ref{fig:semseg2} chains of examples generated during the learning of implicit models are presented (see also Fig.~\ref{alg:alg} in the main part of the paper). The last column is the max-marginal decision for the corresponding image.

\newcommand{\imrowaa}[1]{
\noindent
\includegraphics[width=0.16\textwidth]{./pics/#1_6_y2_c.png}\hfill
\includegraphics[width=0.16\textwidth]{./pics/#1_4_x1.png}\hfill
\includegraphics[width=0.16\textwidth]{./pics/#1_3_y1_c.png}\hfill
\includegraphics[width=0.16\textwidth]{./pics/#1_1_x0.png}\hfill
\includegraphics[width=0.16\textwidth]{./pics/#1_0_y0_c.png}\hfill
\includegraphics[width=0.16\textwidth]{./pics/#1_crf.png}
}

\begin{figure*}
\begin{center}
\imrowaa{0002755}
\imrowaa{0007545}
\imrowaa{0104463}
\imrowaa{0105003}
\imrowaa{1000105}
\imrowaa{3002411}
\imrowaa{5000119}
\imrowaa{5000124}
\imrowaa{5000162}
\imrowaa{6000027}
\end{center}
\caption{\label{fig:semseg1} Chains of examples generated during the learning of implicit models. The first three columns are $\hat y$, $\tilde x$ and $\tilde y$ respectively (see Alg.~\ref{alg:alg1}). The fourth and fifth columns are samples from the training set, i.e.~the observation $x^\ast$ and the ground truth $y^\ast$. The last column is the max-marginal decision for the corresponding image.}
\end{figure*}

\begin{figure*}
\begin{center}
\imrowaa{6000035}
\imrowaa{6000084}
\imrowaa{6000093}
\imrowaa{6000100}
\imrowaa{6000105}
\imrowaa{6000231}
\imrowaa{6000240}
\imrowaa{6000285}
\imrowaa{6000341}
\imrowaa{6000353}
\end{center}
\caption{\label{fig:semseg2} Chains of examples generated during the learning of implicit models. The first three columns are $\hat y$, $\tilde x$ and $\tilde y$ respectively (see Alg.~\ref{alg:alg1}). The fourth and fifth columns are samples from the training set, i.e.~the observation $x^\ast$ and the ground truth $y^\ast$. The last column is the max-marginal decision for the corresponding image.}
\end{figure*}
\end{appendix}
\end{document}